\documentclass{article}

\usepackage{microtype}
\usepackage{graphicx}
\usepackage{booktabs}
\usepackage{hyperref}

\usepackage[accepted]{icml2026}

\usepackage{amsmath}
\usepackage{amssymb}
\usepackage{amsthm}

\theoremstyle{plain}
\newtheorem{proposition}{Proposition}
\theoremstyle{definition}

\newcommand{\vecop}{\operatorname{vec}}
\newcommand{\R}{\mathbb{R}}
\newcommand{\Z}{Z}

\icmltitlerunning{The Gradient Does Not See Rank}

\begin{document}

\twocolumn[
\icmltitle{The Gradient Does Not See Rank:\\
Rank-Indifference in Matrix-CODI on ProsQA}

\begin{icmlauthorlist}
\icmlauthor{Samuel Larson}{pebble}
\end{icmlauthorlist}

\icmlaffiliation{pebble}{Pebble ML}

\icmlcorrespondingauthor{Samuel Larson}{\mbox{samlarson@pebbleml.com}}

\icmlkeywords{continuous chain-of-thought, matrix-valued latents,
mechanistic interpretability, rank, superposition}

\vskip 0.3in
]

\printAffiliationsAndNotice{}  

\begin{abstract}
Continuous chain-of-thought models compress reasoning into latent
tokens. Matrix-valued variants, which route each latent token through a
$d\times d$ matrix bottleneck, introduce rank as a single-sample
structural observable on the latent matrix $\Z$. If matrix latents
carry parallel reasoning paths via superposition, rank should track
them, and truncating $\Z$ to low rank should hurt accuracy on tasks
whose solutions plausibly require multiple components. Across four
training regimes of a matrix-CODI model (three on ProsQA, one on
GSM8K-Aug below the learning threshold), the rank-$k$ projection
ablation curve is flat to within 0.6 percentage points. A three-seed
replication yields $81.0 \pm 2.0$pp accuracy while the final
effective rank of $\Z$ spans $\{4, 12, 13\}$; the loss does not
reward any particular rank. To test whether rank-blindness arises
from the flatten-then-project readout alone, we trained four
readouts: a bilinear reparametrization, a bilinear-plus-GELU readout
nonlinear in $\Z$, an SVD-augmented readout feeding singular values
through an MLP, and a quadratic readout in $\Z\Z^\top$. All four
rank-$k$ curves remain flat (Spearman $p$-values $0.63, 0.14, 0.82,
0.46$). The flat curves persist for readouts nonlinear in $\Z$. A
linear probe on $\Z$ underperforms a raw pretrained hidden state
at target prediction (AUC $0.673$ vs.\ $0.846$). A negative control
on vanilla GPT-2 SFT (no matrix bottleneck, no $\Z$, three seeds,
$n\!=\!500$) reproduces a flat rank-$k$ curve under the same
intervention paradigm with pooled-mean range $0.20$pp, and a
random-$h$ sensitivity floor lands at the same accuracy: the
rank-$k$ ablation alone conflates rank-blindness with
position-irrelevance.
\end{abstract}

\section{Introduction}
\label{sec:intro}

Continuous chain-of-thought (CoT) models replace explicit textual
reasoning steps with continuous latent tokens fed back into the
transformer's residual stream. COCONUT \citep{hao2024coconut} and
CODI \citep{shen2025codi} are representative instances: both
compress an explicit rationale into a small number of continuous
latent positions and decode the answer from the resulting state.
Theoretical work \citep{zhu2025superposition,gozeten2025cot2} argues
that these latents can hold multiple reasoning paths in
\emph{superposition}, so continuous CoT could explore a search tree
in parallel.

\citet{rizvi2026illusion} challenged this: a fine-tuned COCONUT model
reaches 96.6\% on ProsQA \emph{without} feeding back any latent
tokens, against 99.0\% with latents and 85.3\% for explicit CoT. They
named this the \emph{Illusion of Superposition}.

In this paper we study \emph{matrix-CODI}, our variant of CODI in
which each continuous latent token passes through a $d\times d$
matrix bottleneck on the feedback path: the 768-dim latent state is
projected to $d^2$ numbers, reshaped into a matrix $\Z$, and read
back out to 768 dims before being fed back as the next latent
position (\S\ref{sec:background}). The latent token is otherwise
CODI's; what the matrix adds is a structural observable, since a
$d\times d$ thought $\Z$ has a computable rank via its SVD. If each singular direction encodes a separate
reasoning path, truncating $\Z$ to rank $k$ at inference should
degrade accuracy when the task needs more than $k$ paths, making
the rank-$k$ ablation curve a natural probe.

But rank counts parallel paths only under conditions the matrix
parametrization does not guarantee: the stored features must align
with the singular directions of $\Z=\mathrm{reshape}(W_{\text{up}}h)$,
and the training objective must reward that alignment. Superposition,
moreover, need not be orthogonal: features packed at small mutual
angles need not raise the rank, so the number of superposed features
need not equal the rank. We therefore test directly whether
matrix-CODI training makes rank a functional readout of reasoning. It
does not.

\subsection{Contributions}

We report five results on a matrix-CODI bottleneck (GPT-2 small,
$d{=}16$, six latent positions, ProsQA):

\begin{itemize}
\setlength\itemsep{2pt}
\item Rank-$k$ ablation is flat at two distillation weights, with
the multiplicative thinker on or off, and on GSM8K-Aug as well as
ProsQA. Range across $k\!\in\!\{1,2,4,8,16\}$ is $\leq 0.6$pp.
\item Three seeds at otherwise identical hyperparameters land at
effective ranks $\{4, 12, 13\}$ and accuracy of
$80.99\!\pm\!2.0$pp (seeds 1337/42/7: $78.91/81.25/82.81\%$).
\item Four nonlinear-in-$\Z$ readouts (bilinear, bilinear+GELU,
SVD-augmented, quadratic in $\Z\Z^\top$) also give flat curves;
Spearman $p$ in $[0.14,\,0.82]$.
\item A linear probe on $\Z$ (1536 features across six positions)
reaches AUC $0.673$ on ProsQA target prediction; a pretrained
GPT-2 hidden state at 768 features reaches $0.846$ on the same
task.
\item Rank-$k$ on vanilla GPT-2 SFT (no $\Z$, three seeds)
reproduces a flat curve under the same intervention paradigm;
pooled range $0.20$pp.
\end{itemize}

The seed-level rank spread is what distinguishes the rank-blindness
reading from position-irrelevance.

\section{Background}
\label{sec:background}

\subsection{The matrix-CODI bottleneck}

\paragraph{CODI distillation.} CODI \citep{shen2025codi} trains a
student to compress an explicit chain-of-thought into a fixed number
of continuous latent positions by matching a hidden-state target
from a teacher pass. The teacher pass consumes prompt $+$ CoT $+$
answer and produces a reference hidden state at a designated
colon-token position. The student pass consumes prompt $+\;n$ latent
positions $+$ answer, where each latent position is produced by
feeding the previous step's hidden state back as the next input
embedding. A hidden-state L1 loss (the \emph{distillation} loss)
aligns the student's state at the answer colon to the teacher's,
and a standard next-token cross-entropy loss trains the answer
prediction. The total loss is $\mathcal{L}=\gamma\mathcal{L}_{\text{kd}}+\mathcal{L}_{\text{ce}}$.

\paragraph{Why matrix latents.} A vector of dimension $D$ has no
single-sample notion of how many independent components it
superposes; a $d\times d$ matrix has rank as a structural observable.
Matrix-valued memory has a long lineage \emph{inside} attention and
SSM layers (\S\ref{sec:related}). Matrix-CODI is different in
placement: the matrix sits on the explicit chain-of-thought
feedback path, one $d\times d$ thought per latent reasoning
position. We test whether the rank of that matrix behaves as a
per-step count of reasoning paths under CODI-style training.

\paragraph{Matrix bottleneck.} We extend CODI with a
\emph{matrix bottleneck} on the latent feedback path. Given the
previous latent hidden state $h\in\R^{D}$ (here $D=768$ for GPT-2
small):
\begin{align*}
\text{flat} &= W_{\text{up}}\,h,\qquad W_{\text{up}}\in\R^{d^2\times D}\\
\Z &= \text{reshape}(\text{flat};\,d,d)\in\R^{d\times d}\\
\Z &\leftarrow (I+\Delta(\Z))\,\Z\,(I+\Gamma(\Z)) \quad\text{(optional thinker)}\\
h_{\text{out}} &= \text{LayerNorm}(\phi(\Z)),
\end{align*}
where $\phi:\R^{d\times d}\to\R^{D}$ is the \emph{readout}. The
default readout is $\phi(\Z)=W_{\text{down}}\vecop(\Z)$, which we
call the \emph{flatten-then-project} readout. $W_{\text{down}}\in
\R^{D\times d^2}$. We use $d=16$ throughout.

\subsection{Probing rank}

\paragraph{Rank-$k$ ablation.} At inference, compute the SVD
$\Z = U\Sigma V^\top$ and replace $\Z$ by its rank-$k$ truncation
\[
\Z_k \;=\; U_{:,\,:k}\,\Sigma_{:k,\,:k}\,V_{:,\,:k}^\top
\]
before the readout. If rank is functional, accuracy should drop
as $k$ decreases to 1. If rank is vestigial, the curve is flat.

\paragraph{Effective rank.} The numerical rank of $\Z$ is always
$d$ since it is a dense trained matrix. For training curves we
report \emph{effective rank}, a smooth spectral proxy:
\begin{align*}
H_{\text{eff}}(\Z) &\;=\; \exp\!\left(-\sum_i \tilde\sigma_i \log \tilde\sigma_i\right),\\
\tilde\sigma_i &\;=\; \sigma_i \big/ \textstyle\sum_j \sigma_j.
\end{align*}
The ablation itself uses hard top-$k$ truncation. The question of
interest is whether the structural capacity offered by rank $>1$
is functionally used.

\paragraph{ProsQA.} ProsQA \citep{hao2024coconut} is a synthetic
entailment task: a diamond-shaped directed acyclic graph of
entities and a question \emph{which leaf entity has property $P$?}.
Each problem has a unique positive answer and a single distractor.
We use the training split from the original COCONUT release. Every
"best" accuracy below is the maximum over 25 per-epoch evaluations on
the first 128 problems of the 500-problem COCONUT test file (resolution
$0.78$pp per problem); the linear probe and negative control draw on
the full 500-problem held-out set, noted where used.

\section{The Flatten-Then-Project Readout is Rank-Blind}
\label{sec:flat}

\subsection{Four flat rank-$k$ curves}

We ran the matrix bottleneck under four training conditions that
vary the task, the CODI distillation weight $\gamma$, and the
multiplicative thinker. Each run produced a single rank-$k$ ablation
curve.

\begin{table}[t]
\caption{Rank-$k$ projection ablation across four training
conditions. $\Z$ rank is the mean effective rank of the $16\times16$
matrix thought at eval time. $r_s$ is the Spearman rank correlation
between per-sample effective rank and correctness. Range across
$k\in\{1,2,4,8,16\}$ is $\leq 0.6$pp in every row. R1 (GSM8K-Aug)
is at a $6\%$ operating point below the learning threshold and is
not interpretable on its own.}
\label{tab:four-flat}
\centering
\footnotesize
\setlength{\tabcolsep}{2pt}
\begin{tabular}{llcclccc}
\toprule
Run & Task & $\gamma$ & Thinker & $\Z$ rank & k{=}1 & k{=}16 & $r_s$ \\
\midrule
R1 & GSM8K-Aug & 1.0 & on  & 5.5  & 6.00\% & 6.12\% & $-0.023$ \\
R2 & ProsQA    & 1.0 & on  & 10.2 & 78.4\% & 78.4\% & $+0.026$ \\
R3a & ProsQA   & 0.0 & on  & 12.7 & 76.8\% & 76.6\% & $-0.105$ \\
R3b & ProsQA   & 0.0 & off & 12.8 & 72.6\% & 72.4\% & $+0.095$ \\
\bottomrule
\end{tabular}
\end{table}

Varying the task (arithmetic vs.\ logical), the distillation weight
(removing the L1-at-colon loss raises effective rank from
$\sim\!10$ to $\sim\!13$ but leaves the curve shape unchanged), or
the multiplicative thinker (off drops accuracy by $\sim\!1$pp; curve
still flat) does not bend any of the four curves. A vanilla GPT-2 SFT
model with no $\Z$ at all also produces a flat curve under the same
probe (\S\ref{sec:pc-negctrl}). The three-seed decoupling result
(Fig.~\ref{fig:seed-decoupling}) below is the model-level evidence
the negative control cannot reproduce by construction.

\subsection{Why: the readout Jacobian is constant in $\Z$}

The flatten-then-project readout is
$\phi(\Z) = W_{\text{down}}\vecop(\Z)$. Its Jacobian with respect
to $\vecop(\Z)$ is the constant matrix $W_{\text{down}}$:
\[
\frac{\partial\phi}{\partial\vecop(\Z)} \;=\; W_{\text{down}}.
\]
By the chain rule, the loss gradient is
\[
\frac{\partial\mathcal{L}}{\partial\vecop(\Z)}
\;=\;
W_{\text{down}}^{\top}\,
\frac{\partial\mathcal{L}}{\partial\phi}.
\]
The pullback factor $W_{\text{down}}^{\top}$ is independent of
$\Z$. Any $\Z$-dependence of the gradient enters only through the
upstream factor $\partial\mathcal{L}/\partial\phi$, which depends
on $\Z$ only through the value $\phi(\Z)$, not through any
explicit factor of $\Z$'s SVD basis.

\begin{proposition}[Linear readout has constant Jacobian]
\label{prop:jac}
Let $\phi:\R^{d\times d}\to\R^{D}$ be linear in $\Z$. The Jacobian
$\partial\phi/\partial\vecop(\Z)$ is a constant matrix
$W\in\R^{D\times d^2}$, and the loss gradient is
$\partial\mathcal{L}/\partial\vecop(\Z) = W^{\top}\,\partial\mathcal{L}/\partial\phi$.
The pullback factor $W^{\top}$ is independent of $\Z$. Any
rank-dependence of $\partial\mathcal{L}/\partial\Z$ enters only
through the upstream factor $\partial\mathcal{L}/\partial\phi$,
which sees $\Z$ only through $\phi(\Z)$. The chain rule therefore
introduces no term that couples directly to the SVD basis of
$\Z$; the loss has no built-in rank reward. Implicit bias from the
optimizer (Adam $+$ weight decay) and upstream regularization may
still shape rank through channels outside $\mathcal{L}$
(\S\ref{sec:related}).
\end{proposition}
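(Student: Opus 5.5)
The plan is to separate the statement into its mathematical content and its interpretive gloss. Only the first needs proof. The mathematical content has three parts: (i) a linear map from $\R^{d\times d}$ to $\R^{D}$ has a constant Jacobian with respect to $\vecop(\Z)$; (ii) the chain rule gives the stated gradient formula; (iii) every $\Z$-dependence of that gradient factors through the value $\phi(\Z)$. I will prove these in order and then explain why they justify the phrase ``no built-in rank reward.''

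For (i), $\vecop$ is a linear isomorphism $\R^{d\times d}\cong\R^{d^2}$, so $\psi := \phi\circ\vecop^{-1}:\R^{d^2}\to\R^{D}$ is linear. By the standard representation of linear maps between finite-dimensional spaces, there is a unique $W\in\R^{D\times d^2}$ with $\psi(x)=Wx$. Its entries are $W_{ij}=\phi(E_j)_i$, where $E_j$ is the matrix unit with $\vecop(E_j)=e_j$. Differentiating $x\mapsto Wx$ gives Jacobian $W$ at every point, and $W$ does not depend on $\Z$. For the default readout, $W=W_{\text{down}}$. The thinker step is not part of $\phi$ here; if it were, the hypothesis would fail, so the statement applies to $\phi$ as it acts on the post-thinker $\Z$.

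For (ii), write the loss as $\mathcal{L}=g(\phi(\Z))$, where $g$ collects LayerNorm and all downstream computation. I assume $g$ is differentiable at $\phi(\Z)$; this holds almost everywhere for GPT-2 with GELU. The multivariable chain rule then gives $\partial\mathcal{L}/\partial\vecop(\Z) = (\partial\phi/\partial\vecop(\Z))^{\top}\,\nabla g(\phi(\Z)) = W^{\top}\,\partial\mathcal{L}/\partial\phi$.

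For (iii), the upstream factor is $\nabla g$ evaluated at $\phi(\Z)=W\vecop(\Z)$. Hence the gradient is a composite $\vecop(\Z)\mapsto W^{\top}\nabla g(W\vecop(\Z))$. Two consequences follow.
\begin{itemize}
\item Any two matrices $\Z,\Z'$ with $\vecop(\Z-\Z')\in\ker W$ receive identical gradients.
\item The gradient always lies in $\operatorname{row}(W)$.
\end{itemize}
Neither $\ker W$ nor $\operatorname{row}(W)$ is defined in terms of the singular vectors of $\Z$. So no factor of the form $U$, $\Sigma$, $V$, or a function of them appears in the pullback.

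The step I expect to be the main obstacle is turning ``introduces no term that couples directly to the SVD basis'' into a precise claim without overclaiming. The gradient does vary with $\Z$, and through $g$ it can correlate with rank. I would state the claim as a factorization: there exists a map $G:\R^{D}\to\R^{d^2}$ such that $\partial\mathcal{L}/\partial\vecop(\Z)=G(\phi(\Z))$, with $G=W^{\top}\nabla g$. By contrast, an SVD-dependent readout such as $\phi(\Z)=\sum_i f(\sigma_i)u_iv_i^{\top}$ has a Jacobian that involves $U$ and $V$ and their perturbation terms $1/(\sigma_i^2-\sigma_j^2)$. The linear case has no such terms. The closing caveats about the optimizer (Adam, weight decay) are remarks outside $\mathcal{L}$ and need no proof; I would note only that weight decay acts on $W_{\text{up}}$ and $W_{\text{down}}$, not on $\mathcal{L}$ as a function of $\Z$.
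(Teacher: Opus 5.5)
Your proposal is correct and follows the paper's own argument: write $\phi(\Z)=W\vecop(\Z)$, read off the constant Jacobian $W$, and apply the chain rule to get $\partial\mathcal{L}/\partial\vecop(\Z)=W^{\top}\,\partial\mathcal{L}/\partial\phi$; your remarks on the $\vecop$ isomorphism, the thinker, and differentiability make explicit what the paper's sketch leaves implicit. One caution on step (iii): since $D=768>d^2=256$, $W_{\text{down}}$ is generically injective, so $\ker W=\{0\}$, $\operatorname{row}(W)=\R^{d^2}$, and the factorization through $\phi(\Z)$ places no restriction on how the gradient depends on the spectrum of $\Z$ --- it certifies only that the pullback factor itself is SVD-free, which is exactly the weaker reading the paper's Remark adopts.
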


\begin{proof}[Proof sketch]
Differentiating $\phi(\Z) = W\,\vecop(\Z)$ gives
$\partial\phi/\partial\vecop(\Z) = W$, a constant matrix. The
chain rule then yields
$\partial\mathcal{L}/\partial\vecop(\Z) = W^{\top}\,\partial\mathcal{L}/\partial\phi$;
the prefactor $W^{\top}$ does not depend on $\Z$. \qedhere
\end{proof}

\paragraph{Remark.} The proposition is a statement about the
chain-rule pullback, not about the full $\Z$-dependence of the
gradient. Different $\Z$'s produce different
$\partial\mathcal{L}/\partial\phi$ and therefore different
gradients. What is ruled out is an SVD-basis-dependent term in
the chain rule itself. Whether the absence of an explicit rank
preference yields rank-blind behavior in practice is an empirical
question; \S\ref{sec:pc} tests it by replacing $\phi$ with
nonlinear functions, and the flat curves persist, suggesting
at least part of the explanation lies elsewhere.

The four training conditions in Table~\ref{tab:four-flat} are
consistent with the proposition's prediction for linear $\phi$.
\S\ref{sec:pc} tests what happens when the linearity premise is
violated.

\subsection{Accuracy is decoupled from rank across seeds}

Proposition~\ref{prop:jac} predicts that the loss landscape should
be flat along rank-changing directions of $\Z$: a model ending
training at rank $12$ and a model ending at rank $4$ can achieve the
same accuracy. We test this with three training seeds of the same
flatten-readout configuration (gpt2-small, ProsQA, $\gamma=0$, 25
epochs, batch 16).

\begin{figure}[t]
\centering
\includegraphics[width=\linewidth]{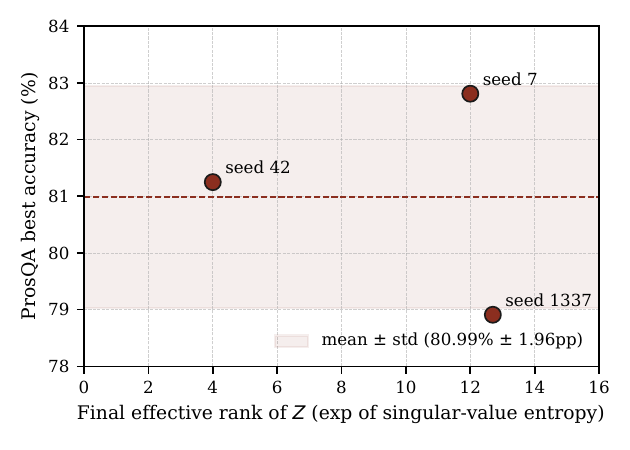}
\caption{Three-seed replication of the flatten readout. Best ProsQA
accuracy is $80.99\pm 2.0$pp (seeds 1337/42/7: $78.91/81.25/82.81\%$), but the final effective rank
of $\Z$ varies $3\times$: seed 42 converges at rank $\sim\!4$;
seeds 1337 and 7 converge near rank $12$. The loss does not push
$\Z$ toward any particular rank.}
\label{fig:seed-decoupling}
\end{figure}

\subsection{Linear probe on $\Z$}

If rank is vestigial, what \emph{does} $\Z$ carry? A 5-fold
cross-validated multi-class logistic regression predicts the ProsQA
target class from a flattened $\Z$ on $500$ held-out test problems.
Controls: the same prompt's hidden state from a pretrained GPT-2
with no ProsQA fine-tuning, and the same model at random
initialization.

\begin{figure}[t]
\centering
\includegraphics[width=\linewidth]{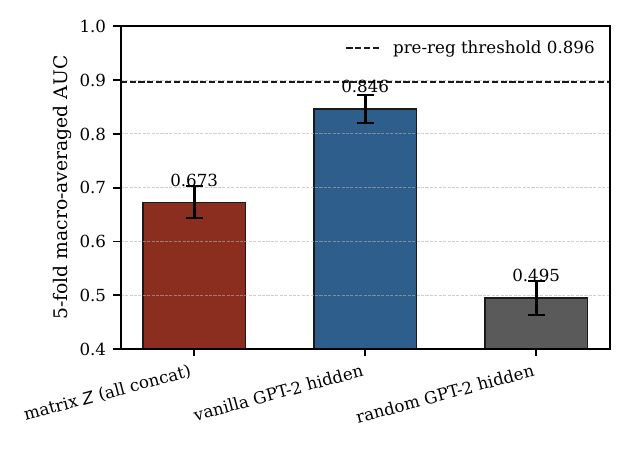}
\caption{Linear probe AUC for ProsQA target class prediction.
Pre-registered threshold for a positive result was
$\max(\text{vanilla},\text{random})+0.05=0.896$. The matrix $\Z$
concat AUC of $0.673$ does not exceed it. Vanilla GPT-2, never
trained on ProsQA, predicts the target class better than the
trained matrix-CODI bottleneck.}
\label{fig:probe}
\end{figure}

Vanilla pretrained GPT-2 reaches AUC $0.846$ at 768 features. The
matrix-CODI bottleneck's concatenated matrix thought (6 latent
positions $\times$ 256 features $= 1536$ features) reaches AUC
$0.673$: more features than the vanilla hidden state, lower
predictive signal for the ProsQA target. A dimension-matched
comparison (probe on the post-bottleneck reconstructed 768-dim
hidden state $W_{\text{down}}\vecop(\Z)$ that the downstream
transformer consumes) is pending. A binary target-vs-distractor
probe on the same $\Z$ tensors is at chance (AUC $0.50$--$0.56$)
across all conditions.

\section{Depth and Scale Do Not Rescue Matrix-CODI}
\label{sec:depth-scale}

The results in \S\ref{sec:flat} and \S\ref{sec:pc} may be specific to
$d=16$, GPT-2 small, and six latent positions. We test two axes: depth
(number of iterative latent refinement steps) and backbone scale.

\subsection{Depth sweep}

\paragraph{Depth sweep (preliminary).} At $n\!=\!6$ latent
refinement steps, vanilla CODI reaches $78.91\%$ on ProsQA,
$\sim\!2.9$pp below pure SFT. The $n\!\in\!\{16,32,64\}$
configurations exceeded memory at the default batch sizes; re-runs at
smaller batches are in progress. A single non-baseline data point is
not a depth sweep, so we report $n\!=\!6$ as a single number and
do not draw a trend.

\subsection{Scale sweep}

We trained vanilla SFT and matrix-CODI ($d=16$, six latent positions,
$\gamma=0$) on ProsQA at three backbone sizes: GPT-2 small (124M),
GPT-2 medium (355M), and GPT-2 large (774M). Matrix-CODI at GPT-2
large exceeded memory at batches of 2 and 4; it is omitted from
Fig.~\ref{fig:scale}.

\begin{figure}[t]
\centering
\includegraphics[width=\linewidth]{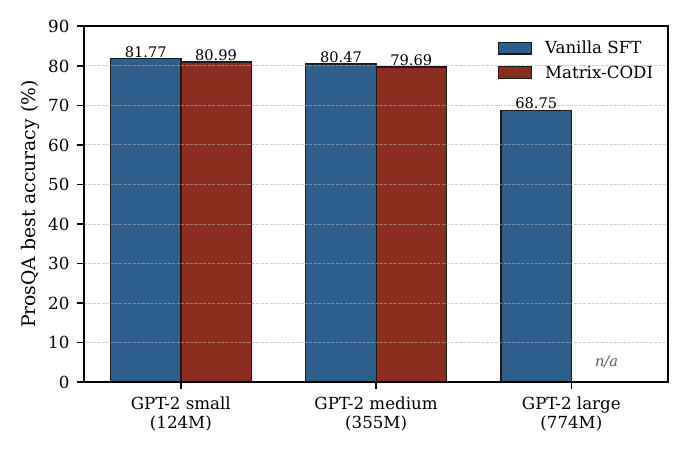}
\caption{Scale sweep on ProsQA. Vanilla SFT degrades at GPT-2 large
($68.75\%$) compared to GPT-2 small ($81.77\%$, three-seed mean).
ProsQA ($17{,}886$ training examples) likely under-optimizes the
larger backbone at default AdamW $\text{lr}=10^{-4}$. Matrix-CODI's
best accuracy is below its matched vanilla SFT baseline at both tested
scales; gaps are within three-seed standard deviation. GPT-2 large
matrix-CODI is pending.}
\label{fig:scale}
\end{figure}

Three observations from Fig.~\ref{fig:scale}:

\begin{itemize}
\setlength\itemsep{2pt}
\item \textbf{Matrix-CODI does not exceed vanilla SFT at any tested
      scale.} Gap is $-0.78$pp at GPT-2 small (three-seed means, $80.99$ vs $81.77$; the single seed-1337 cell reads $-2.86$pp) and $-0.78$pp at GPT-2
      medium, both within the three-seed standard deviation of
      $\pm 2.0$pp measured at GPT-2 small (\S\ref{sec:flat}). The
      sign is consistent across two scales. Matrix-CODI at GPT-2
      large is pending.
\item \textbf{Vanilla SFT itself degrades with scale on ProsQA.}
      The ProsQA training set ($17{,}886$ examples) is small relative
      to GPT-2 large's capacity; default AdamW at $\text{lr}=10^{-4}$
      likely under-optimizes the larger backbone. This is a data-size
      artifact of ProsQA, not a finding about superposition.
\item \textbf{Matrix does not rescue the regression.} If matrix-CODI's
      inductive bias were relevant at scale, we would expect it to
      preserve or improve on its gpt2-small performance as capacity
      grows. It does not.
\end{itemize}

This sweep establishes that matrix-CODI does not exceed a matched
vanilla SFT baseline at the tested scales and training configurations.
It does not rule out settings in which matrix bottlenecks help at
larger scale under different training regimes.

\paragraph{Sample efficiency.} A sample-efficiency sweep (200, 500,
2000, 5000, $17{,}886$ training examples) puts matrix-CODI strictly
below vanilla SFT at every $N$ below $17{,}886$, with the gap growing
as $N$ shrinks (matrix at $N=200$: $12.6\%$; vanilla at $N=200$:
$26.0\%$). Full sample-efficiency table is in the release.

\section{Positive Control: Nonlinear Readouts Also Produce Flat Curves}
\label{sec:pc}

Proposition~\ref{prop:jac} gives a sufficient condition for the
readout Jacobian to be constant in $\Z$. We test the prediction by
training four readouts that violate it and measuring the rank-$k$
curve on each.

\subsection{Four readout variants}

We replace only $\phi:\R^{d\times d}\to\R^D$ in the matrix bottleneck
and leave all other training hyperparameters fixed (gpt2-small,
ProsQA, $\gamma=0$, six latent positions, $d=16$, 25 epochs, batch
16, seed $1337$, AdamW at $\text{lr}=10^{-4}$). All four are trained
at $\gamma=0$ (no CODI L1-at-colon term), so the flat curves below
are about the cross-entropy loss through the matrix bottleneck, not
specifically the CODI distillation loss.

\paragraph{Bilinear.} $\phi(\Z) = W\,\operatorname{probes}(\Z)$
where $\operatorname{probes}(\Z)_k = u_k^\top \Z v_k$ for $K=d^2$
learned probe pairs $(u_k,v_k)\in\R^d\times\R^d$. Each probe is a
Frobenius inner product $\langle u_k v_k^\top,\Z\rangle_F$, linear
in $\Z$. A reparametrization control: it checks that the low-rank
factoring of $W_{\text{down}}$ does not change the curve.

\paragraph{Bilinear+GELU.} $\phi(\Z) = W\,\operatorname{GELU}(
\operatorname{probes}(\Z))$. The GELU gates each probe by a scalar
that depends on $\Z$, making $\phi$ nonlinear in $\Z$. The Jacobian
is not constant, but its column space in $\R^{d\times d}$ is fixed
(the span of the $u_k v_k^\top$); only column magnitudes vary with
$\Z$. A mild violation of Proposition~\ref{prop:jac}; SVD-augmented
(next) is a stronger one.

\paragraph{SVD-augmented.} $\phi(\Z) = W_{\text{down}}\vecop(\Z) +
\operatorname{MLP}(\sigma(\Z))$, where $\sigma(\Z)\in\R^d$ is the
vector of singular values fed through two dense layers with GELU.
We compute $\sigma(\Z)$ via \texttt{torch.linalg.svdvals}, whose
backward is documented as unconditionally numerically stable (in
contrast to full \texttt{torch.linalg.svd}, whose backward is
unstable at near-coincident singular values). This variant
explicitly exposes rank to the optimizer. SVD-augmented's best
accuracy ($78.12\%$) is below the flatten baseline ($78.91\%$, a one-problem margin on the 128-problem split), so
the optimizer is not zeroing the \texttt{sigma\_proj} branch and
falling back to flatten alone.

\paragraph{Quadratic.} $\phi(\Z) = W_{\text{down}}\vecop
\operatorname{concat}(\Z\Z^\top,\Z^\top\Z)$. Quadratic in $\Z$ and
linear in the second-moment tensor. The gradient with respect to
$\Z$ depends on $\Z$ itself.

\subsection{All four rank-$k$ curves are flat}

We computed the rank-$k$ ablation on each trained checkpoint at
$k\in\{1,2,4,8,16\}$ on $128$ ProsQA test problems (the standard
CODI eval split).

\begin{figure}[t]
\centering
\includegraphics[width=\linewidth]{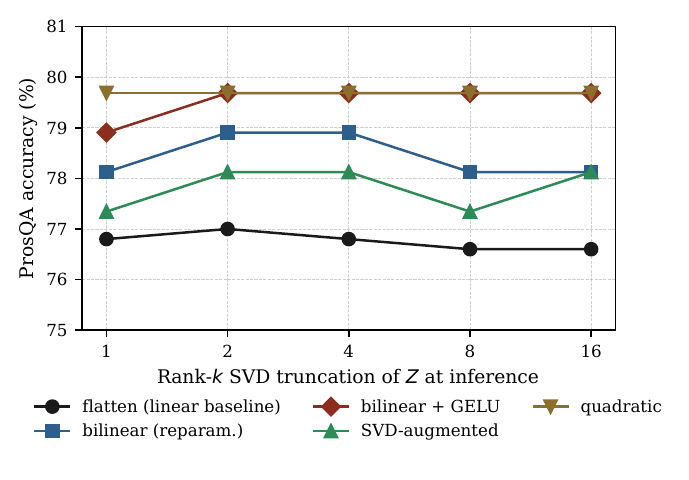}
\caption{Rank-$k$ projection ablation for five readouts on ProsQA.
The flatten (linear) baseline is the Round 3 $\gamma=0$ run. All
four positive-control readouts, including the explicitly
nonlinear-in-$\Z$ Bilinear+GELU, the SVD-augmented readout that
feeds singular values through an MLP, and the quadratic readout in
$\Z\Z^\top$, produce curves flat to within $\sim\!0.8$pp. The
Quadratic readout is perfectly flat at $79.69\%$ across all five
$k$.}
\label{fig:pc}
\end{figure}

\begin{table}[t]
\caption{Rank-$k$ ablation accuracies (\%) by readout on $128$
ProsQA test problems. Spearman $r_s$ of per-sample effective rank
against correctness; $p$ is two-sided. None significant.}
\label{tab:pc}
\centering
\footnotesize
\setlength{\tabcolsep}{3pt}
\begin{tabular}{lccccccl}
\toprule
Readout & k{=}1 & k{=}2 & k{=}4 & k{=}8 & k{=}16 & $r_s$ & $p$ \\
\midrule
flatten       & 79.0 & 79.0 & 79.0 & 79.0 & 79.0 & $\sim\!0$ & --- \\
bilinear      & 78.1 & 78.9 & 78.9 & 78.1 & 78.1 & $+0.04$ & $0.63$ \\
bilinear+GELU & 78.9 & 79.7 & 79.7 & 79.7 & 79.7 & $-0.13$ & $0.14$ \\
svd-aug       & 77.3 & 78.1 & 78.1 & 77.3 & 78.1 & $+0.02$ & $0.82$ \\
quadratic     & 79.7 & 79.7 & 79.7 & 79.7 & 79.7 & $+0.07$ & $0.46$ \\
\bottomrule
\end{tabular}
\end{table}

All four $p$-values are above $0.14$; the Quadratic readout is
identical across all five $k$. The SVD-augmented readout, which
exposes singular values directly to the optimizer, does not produce
a rank-dependent curve.

\subsection{The mechanism is not just readout linearity}

Readouts with non-constant Jacobians still produce flat rank-$k$
curves. Proposition~\ref{prop:jac} is sufficient but not necessary
to cause them.

A plausible refinement: the trained readout's Jacobian at test
inputs has an effectively rank-1 active subspace in $\Z$ regardless
of whether the readout is in-principle nonlinear. In the absence of
an objective term that rewards rank, every readout family tested
admits a rank-1 shortcut and the optimizer takes it. We have not
yet measured $\mathrm{erank}(J(\Z))$ on these checkpoints to test
this directly.

Combined with the three-seed decoupling (effective ranks
$\{4,12,13\}$ at matched accuracy, Fig.~\ref{fig:seed-decoupling}),
the rank of $\Z$ in matrix-CODI is a free direction in the loss
landscape, and the four readouts above do not constrain it.
Candidates that might (we do not test them): an explicit rank reward
in the loss, tasks with verifiable multi-rank ground truth, or
step-level supervision in the spirit of SIM-CoT
\citep{shen2025simcot}.

\subsection{Negative control: rank-$k$ ablation on vanilla GPT-2 SFT}
\label{sec:pc-negctrl}

If the flat rank-$k$ curves were specific to the matrix-bottleneck
objective, running the same probe on a model with no bottleneck
should bend them. We run that test on a vanilla GPT-2 small
fine-tuned for ProsQA via standard supervised fine-tuning
(\textsc{pure-sft}; no latent tokens, no $\Z$, no distillation).
Three seeds $\{1337, 42, 7\}$ trained to $\sim\!79$pp ProsQA accuracy
matching the paper's vanilla baseline. We then \emph{construct} a
fake $\Z$ at inference by reshaping the first $256$ dimensions of
$h$ into a $16\!\times\!16$ matrix at the six token positions
immediately preceding the answer-prefix colon (the analog of
matrix-CODI's six latent positions), apply rank-$k$ truncation to
the fake $\Z$ via SVD, and propagate the modified residual through
the remaining transformer blocks. Decoding uses no KV cache, so the
intervention re-fires at every step.

\begin{table}[t]
\caption{Negative control: rank-$k$ ablation on vanilla GPT-2 SFT
(no matrix bottleneck, no $\Z$). Fake $\Z$ built from
$h[:\!256]$ reshaped to $16\!\times\!16$ at the six analog-latent
positions. Pooled-mean range across $k$ is $0.20$pp. Per-seed
Spearman $r_s$: $+0.32, -0.16, +0.71$ ($n\!=\!500$ test problems
each).}
\label{tab:pc-negctrl}
\centering
\small
\setlength{\tabcolsep}{4pt}
\begin{tabular}{lccccc}
\toprule
seed & $k=1$ & $k=2$ & $k=4$ & $k=8$ & $k=16$ \\
\midrule
$1337$ & $79.80$ & $80.20$ & $80.00$ & $80.20$ & $80.00$ \\
$42$   & $79.00$ & $78.80$ & $78.60$ & $78.60$ & $79.00$ \\
$7$    & $78.00$ & $78.00$ & $78.00$ & $78.00$ & $78.20$ \\
\midrule
pooled & $78.93$ & $79.00$ & $78.87$ & $78.93$ & $79.07$ \\
\bottomrule
\end{tabular}
\end{table}

The pooled-mean range across $k$ is $0.20$pp
(Table~\ref{tab:pc-negctrl}; Fig.~\ref{fig:pc-negctrl} overlays
per-seed and pooled curves on the matrix-CODI flatten reference).
As a sensitivity floor, replacing $h$ at the same six positions with
i.i.d.\ Gaussian noise matched in mean and standard deviation
produces seed accuracies $\{79.6, 79.2, 78.2\}$pp, statistically
indistinguishable from the unablated $\{80.0, 78.8, 78.0\}$pp. The
intervention paradigm is uninformative on this model.

\begin{figure}[t]
\centering
\includegraphics[width=\linewidth]{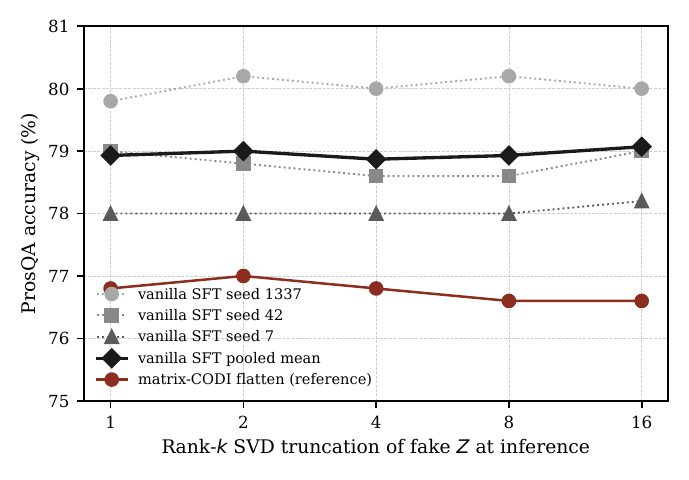}
\caption{Negative control: rank-$k$ ablation on vanilla GPT-2 SFT
(no matrix bottleneck, no $\Z$). Each gray dotted line is one seed
of the vanilla model; the heavy black line is the pooled mean
across the three seeds. The matrix-CODI flatten curve from
\S\ref{sec:flat} is overlaid in red as the reference. Both curves
are flat to within a fraction of a percentage point.}
\label{fig:pc-negctrl}
\end{figure}

A flat rank-$k$ curve is consistent with two states: a rank-blind
objective, or positions that do not carry the task's information.
Vanilla SFT is in the second by construction. The rank-$k$ ablation
alone cannot distinguish them.

In matrix-CODI, the bottleneck forces information through $\Z$ at
those positions during training, and the trained $\Z$ reaches
effective rank $12$--$13$ at $\gamma=0$. The seed-decoupling result
(Fig.~\ref{fig:seed-decoupling}) is a model-level property a
position-irrelevance reading would not predict.

\section{Related Work}
\label{sec:related}

\paragraph{Latent CoT and the Illusion of Superposition.}
\citet{rizvi2026illusion}'s fine-tuned COCONUT reaches $96.6\%$ on
ProsQA \emph{without} latent feedback, against $99.0\%$ with it.
Our vanilla SFT at GPT-2 small reaches $81.77\%$, roughly $15$pp
below that; we did not close the gap. The qualitative phenomenon
replicates at our operating point under the $\gamma=1$
(distillation-loss-active) configuration: matrix-CODI reaches
$82.03\%$ against pure SFT's $81.77\%$, a single-seed comparison. The
$\gamma=0$ configuration used in \S\ref{sec:flat}--\S\ref{sec:depth-scale}
removes the distillation loss and averages $80.99\pm2.0$pp over three
seeds ($78.91/81.25/82.81\%$); the $\gamma=1$ point falls inside that
spread. Latent feedback does not move accuracy beyond vanilla SFT under
either setting. Our
contribution relative to that work is a structural argument about
the training objective: the matrix-bottleneck produces
rank-indifferent gradients, and four nonlinear-in-$\Z$ readouts
fail to bend the rank-$k$ curve.

\paragraph{SIM-CoT.}
\citet{shen2025simcot} diagnose latent CoT instability as
insufficient step-level supervision and propose injecting per-step
targets. Our diagnosis is at a different layer (the matrix
bottleneck's objective produces rank-indifferent gradients,
adjudicated by four positive-control readouts). The two diagnoses
are consistent with both mechanisms operating simultaneously.

\paragraph{Reasoning by Superposition and CoT2.}
\citet{zhu2025superposition} prove that a two-layer transformer
with $D$ steps of continuous thought can solve directed graph
reachability, with each thought encoding a parallel BFS frontier.
\citet{gozeten2025cot2} show similar parallel-exploration behavior
under a GRPO-style training regime. Both are theoretical capacity
results with small empirical demonstrations. Capacity and
what-gets-learned are distinct; our result is about what CODI
distillation shapes, not whether transformers can in principle
encode superposition.

\paragraph{February 2026 rank measurements (direct adjacency).}
\citet{nazari2026rank} measure the effective rank of linear
attention hidden states and propose post-training rank pruning of
$K$ and $Q$ matrices. \citet{anonymous2026staterank} report
``state-rank stratification'' during pretraining: linear-attention
heads bifurcate into persistently low-rank and high-rank groups.
Both papers measure rank in the \emph{fast-weight memory} inside
an attention layer (a $d \times d$ accumulator), and both make
\emph{descriptive} claims about what trained networks end up with.
The object of study here is different (the explicit per-position
matrix latents $\Z$ on the matrix-CODI feedback path, not a
fast-weight memory inside attention), and the claim is a
mechanism claim about the training objective
(Proposition~\ref{prop:jac}) that we test by constructing four
nonlinear-in-$\Z$ positive controls.

\paragraph{Dynamics within latent CoT.}
\citet{anonymous2026dynamics} run multiple intervention protocols
on latent CoT hidden states (zero, mean, step-wise mean, Gaussian
noise) and an early-stop decoding that truncates latent
computation after step $k$. Their early-stop decoding is analogous
to our rank-$k$ ablation on a different axis: step depth vs.\
spectral truncation. Their step-wise causal structure is consistent
with our linear-probe decay $\Z[1]\to\Z[5]$ (early positions carry
information).

\paragraph{Rank-trajectory probing in depth-recurrent latent CoT.}
\citet{lu2025huginn} probe rank trajectories across recurrent
blocks in Huginn-3.5B, a depth-recurrent transformer that reuses
layers at inference. They find limited evidence of interpretable
latent CoT via rank-trajectory analysis. Our setting differs in
both model class (decoder-only GPT-2 with explicit per-position
$\Z$, not depth-recurrent shared-weight blocks) and observable
(rank-$k$ truncation of the trained $\Z$, not the rank trajectory
across recurrent depth), but the two papers reach a kindred
negative reading: rank-based probing of latent CoT does not, in
either setting, straightforwardly reveal a multi-path
superposition picture.

\paragraph{Rank decay and token uniformity in stacked attention.}
\citet{dong2021attention} showed that pure attention loses rank
doubly exponentially with depth. \citet{yan2022uniformity}
characterize the related token-uniformity phenomenon in
BERT-family encoders via the singular-value distribution of layer
outputs and propose a transformation that flattens the spectrum.
Both lines measure rank in the activations of a stack of
attention layers; ours measures the rank of an explicit matrix
latent on a feedback path. The two objects of study are distinct.

\paragraph{Implicit low-rank bias.}
Gradient descent on matrix-factorization losses has an implicit
bias toward low-rank solutions
\citep{gunasekar2017implicit,arora2019implicit,razin2020implicit};
\citet{kobayashi2024weightdecay} attribute this specifically to
weight decay. These concern bias through the parameter space, not
the readout. Our three-seed rank spread $\{4,12,13\}$ is consistent
with a weak attractor and inconsistent with strong collapse, which
would concentrate all seeds at the same low rank.

\paragraph{Alternative substrates and probe critique.}
\citet{wang2025latentvocab} argue that latent reasoning lives in
the vocabulary column space, not in hidden-state SVD directions;
if so, rank-$k$ truncation targets the wrong observable.
\citet{li2024optimal} note that zero/resample ablations (rank-$k$
truncation is one) overestimate component importance vs.\ optimal
ablation, which would only flatten our curves further. The
negative control in \S\ref{sec:pc-negctrl} is the empirical
counterpart.

\section{Discussion and Limitations}
\label{sec:discussion}

\paragraph{Single task, single architecture family.} All core
experiments run on ProsQA \citep{hao2024coconut} on GPT-2
$\{$small, medium, large$\}$. The GSM8K-Aug result in
Table~\ref{tab:four-flat} is at a $6\%$ operating point where the
model is barely learning the task; it is not strong evidence on
its own. The structural argument
(Proposition~\ref{prop:jac}) is stated in terms of the readout
$\phi$ and is therefore architecture-agnostic, but the empirical
evidence covers only this scale family. Cross-dataset replication
on GSM8K at a higher-accuracy operating point is pending.

\paragraph{Seed-dependent $\Z$ rank.} The three-seed decoupling
(Fig.~\ref{fig:seed-decoupling}) is a separate finding: the same
configuration, varying only the seed, produces models at effective
ranks $\{4, 12, 13\}$ with accuracies $\{81.25, 82.81, 78.91\}$.
Three seeds do not give statistical power to claim the rank
distribution is flat or uniform; the narrower claim is that seeds
at otherwise identical hyperparameters converge to materially
different effective ranks, inconsistent with a strong loss-side
preference for a specific rank. Implicit regularization from the
optimizer (Adam $+$ weight decay; \S\ref{sec:related}) may still
shape rank through channels outside $\mathcal{L}$. An $n\!=\!10$
replication is pending.

\paragraph{One seed per positive control.} The four
positive-control variants in \S\ref{sec:pc} were each trained once
(compute-bounded). Spearman $p$-values are computed on $128$ test
problems per checkpoint, which limits power for small effects. A
three-seed replication per variant ($\sim\!42$ H100-hours) and
re-running the four positive-control rank-$k$ evaluations on the
full $500$-problem ProsQA test set are both pending; the
$500$-problem eval raises power to detect $|r_s|\!\geq\!0.15$ from
$\sim\!40\%$ to $\sim\!80\%$ at $\alpha\!=\!0.05$.

\paragraph{Alternative explanation: the task is rank-1-solvable.}
ProsQA has a unique positive answer and a single distractor. If
all answer-predictive information lives in one singular direction
of $\Z$, every architecture would converge to a rank-1 functional
solution and rank-$k$ truncation would be flat. Our data are
consistent with that. What the strong reading does not explain
is that the trained $\Z$ reaches effective rank $12$--$13$ at
$\gamma=0$ instead of collapsing to 1, and three seeds spread to
$\{4,12,13\}$ rather than concentrating. The model builds rank it
does not functionally use, and the rank it builds is
seed-dependent. A reasoning task whose ground truth provably
requires $k>1$ independent quantities at the answer position would
disambiguate; we do not have one at this scale.

A result that would revise our reading: a readout that bends the
rank-$k$ curve on ProsQA (or a comparable structured task) under a
matrix-bottleneck objective. The four readouts in \S\ref{sec:pc}
were chosen to maximize the chance of seeing one. A different
training objective that explicitly rewards rank is a separate
direction we do not address.

\section{Conclusion}
\label{sec:conclusion}

The matrix-bottleneck training objective in CODI does not reward
rank: the readout Jacobian carries no rank information through the
chain rule, the flat rank-$k$ curves are insensitive to nonlinear
readouts that escape the linear-Jacobian shortcut, and three seeds
under matched hyperparameters land at effective ranks
$\{4, 12, 13\}$ with statistically indistinguishable accuracy. The
rank-$k$ probe alone could not distinguish rank-blindness from
position-irrelevance; the seed-level rank spread does.

\section{Reproducibility}
\label{sec:repro}

All training, evaluation, and probe code is released
at \url{https://github.com/saml212/matrix-codi-rank-blindness}.
The release includes:

\begin{itemize}
\setlength\itemsep{2pt}
\item \texttt{run\_matrix\_codi.py}: the matrix-CODI training
      and rank-$k$ evaluation script. The
      \texttt{MatrixBottleneck} class implements the
      $W_{\text{up}}\!\to\!\text{reshape}\!\to\!\text{thinker}\!\to\!\text{flatten}\!\to\!W_{\text{down}}$
      pipeline; all five readouts in \S\ref{sec:pc} are selectable
      via the \texttt{--readout} flag. The same script computes the
      accuracy-vs-$k$ curve and per-sample Spearman correlation
      between effective rank and correctness on a saved checkpoint.
\item \texttt{probe\_z.py}: the linear probe pipeline for
      \S\ref{sec:flat}. Produces the AUC numbers in
      Fig.~\ref{fig:probe}.
\item Raw rank-$k$ evaluation output files (JSON) for the four
      positive-control readouts, matching Table~\ref{tab:pc} and
      Fig.~\ref{fig:pc}.
\item A human-readable experiment log with per-run hyperparameters,
      rank trajectories, and wall-clock times.
\end{itemize}

Datasets: ProsQA \citep{hao2024coconut} and GSM8K-Aug
\citep{shen2025codi}, both from the original releases.

Backbone: pretrained GPT-2 small (124M), medium (355M), and large
(774M) from the standard public checkpoints.

Training hardware: a single NVIDIA H100 (80GB HBM3) per run. All
reported numerical results in the main body trace to a specific
checkpoint and evaluator run in the release.

\paragraph{Headline numbers by source.} Table~\ref{tab:four-flat}
rows R1--R3b come from \texttt{run\_matrix\_codi.py} at seed
$1337$; Fig.~\ref{fig:seed-decoupling} from the same script at
seeds $1337$, $42$, and $7$. Table~\ref{tab:pc} and
Fig.~\ref{fig:pc} are from \texttt{run\_matrix\_codi.py} run once
per readout (flatten, bilinear, bilinear+GELU, SVD-augmented,
quadratic), evaluated via the same script's rank-$k$ mode.
Fig.~\ref{fig:probe} is from \texttt{probe\_z.py} on the Round~3
$\gamma\!=\!0$ checkpoint. The preliminary depth result in
\S\ref{sec:depth-scale} and Fig.~\ref{fig:scale} are from the
vanilla SFT and vanilla CODI scripts in the same release.

\section*{Impact Statement}

This paper presents a negative empirical result on a specific
mechanistic-interpretability probe (the rank-$k$ ablation curve
applied to matrix-valued continuous chain-of-thought latents under
CODI-style distillation). The work advances the field of machine
learning by clarifying that one widely cited single-sample
structural observable does not, in this training regime, measure
what its name suggests. There are many potential societal
consequences of advancing machine-learning interpretability, none
of which we feel must be specifically highlighted here.

\bibliographystyle{icml2026}
\bibliography{refs}

\begin{thebibliography}{18}
\providecommand{\natexlab}[1]{#1}
\providecommand{\url}[1]{\texttt{#1}}
\expandafter\ifx\csname urlstyle\endcsname\relax
  \providecommand{\doi}[1]{doi: #1}\else
  \providecommand{\doi}{doi: \begingroup \urlstyle{rm}\Url}\fi

\bibitem[Arora et~al.(2019)Arora, Cohen, Hu, and Luo]{arora2019implicit}
Arora, S., Cohen, N., Hu, W., and Luo, Y.
\newblock Implicit regularization in deep matrix factorization.
\newblock In \emph{Advances in Neural Information Processing Systems
  (NeurIPS)}, 2019.
\newblock arXiv:1905.13655.

\bibitem[Deng et~al.(2025)Deng, Pang, Wei, Xu, Duan, Xu, Song, Shen, and
  Cheng]{wang2025latentvocab}
Deng, J., Pang, L., Wei, Z., Xu, S., Duan, Z., Xu, K., Song, Y., Shen, H., and
  Cheng, X.
\newblock {LLM} latent reasoning as chain of superposition.
\newblock \emph{arXiv preprint arXiv:2510.15522}, 2025.

\bibitem[Dong et~al.(2021)Dong, Cordonnier, and Loukas]{dong2021attention}
Dong, Y., Cordonnier, J.-B., and Loukas, A.
\newblock Attention is not all you need: Pure attention loses rank doubly
  exponentially with depth.
\newblock In \emph{International Conference on Machine Learning (ICML)}, 2021.
\newblock arXiv:2103.03404.

\bibitem[Gozeten et~al.(2025)Gozeten, Ildiz, Zhang, Harutyunyan, Rawat, and
  Oymak]{gozeten2025cot2}
Gozeten, A., Ildiz, M.~E., Zhang, Y., Harutyunyan, H., Rawat, A.~S., and Oymak,
  S.
\newblock Continuous chain of thought enables parallel exploration and
  reasoning.
\newblock \emph{arXiv preprint arXiv:2505.23648}, 2025.

\bibitem[Gunasekar et~al.(2017)Gunasekar, Woodworth, Bhojanapalli, Neyshabur,
  and Srebro]{gunasekar2017implicit}
Gunasekar, S., Woodworth, B., Bhojanapalli, S., Neyshabur, B., and Srebro, N.
\newblock Implicit regularization in matrix factorization.
\newblock In \emph{Advances in Neural Information Processing Systems
  (NeurIPS)}, 2017.
\newblock arXiv:1705.09280.

\bibitem[Hao et~al.(2024)Hao, Sukhbaatar, Su, Li, Hu, Weston, and
  Tian]{hao2024coconut}
Hao, S., Sukhbaatar, S., Su, D., Li, X., Hu, Z., Weston, J., and Tian, Y.
\newblock Training large language models to reason in a continuous latent
  space.
\newblock \emph{arXiv preprint arXiv:2412.06769}, 2024.

\bibitem[Kobayashi et~al.(2024)Kobayashi, Akram, and von
  Oswald]{kobayashi2024weightdecay}
Kobayashi, S., Akram, Y., and von Oswald, J.
\newblock Weight decay induces low-rank attention layers.
\newblock In \emph{Advances in Neural Information Processing Systems
  (NeurIPS)}, 2024.
\newblock arXiv:2410.23819.

\bibitem[Li \& Janson(2024)Li and Janson]{li2024optimal}
Li, M. and Janson, L.
\newblock Optimal ablation for interpretability.
\newblock In \emph{Advances in Neural Information Processing Systems
  (NeurIPS)}, 2024.
\newblock arXiv:2409.09951.

\bibitem[Li et~al.(2026)Li, Bai, Chen, Li, Yang, Lin, and
  Zhang]{anonymous2026dynamics}
Li, Z., Bai, X., Chen, K., Li, Y., Yang, J., Lin, C., and Zhang, M.
\newblock Dynamics within latent chain-of-thought: An empirical study of causal
  structure.
\newblock \emph{arXiv preprint arXiv:2602.08783}, 2026.

\bibitem[Lu et~al.(2025)Lu, Yang, Lee, Li, and Liu]{lu2025huginn}
Lu, W., Yang, Y., Lee, K., Li, Y., and Liu, E.
\newblock Latent chain-of-thought? decoding the depth-recurrent transformer.
\newblock \emph{arXiv preprint arXiv:2507.02199}, 2025.

\bibitem[Nazari \& Rusch(2026)Nazari and Rusch]{nazari2026rank}
Nazari, P. and Rusch, T.~K.
\newblock The key to state reduction in linear attention: A rank-based
  perspective.
\newblock \emph{arXiv preprint arXiv:2602.04852}, 2026.

\bibitem[Razin \& Cohen(2020)Razin and Cohen]{razin2020implicit}
Razin, N. and Cohen, N.
\newblock Implicit regularization in deep learning may not be explainable by
  norms.
\newblock In \emph{Advances in Neural Information Processing Systems
  (NeurIPS)}, 2020.
\newblock arXiv:2005.06398.

\bibitem[Rizvi-Martel et~al.(2026)Rizvi-Martel, Rabusseau, and
  Mosbach]{rizvi2026illusion}
Rizvi-Martel, M., Rabusseau, G., and Mosbach, M.
\newblock The illusion of superposition? a principled analysis of latent
  thinking in language models.
\newblock \emph{arXiv preprint arXiv:2604.06374}, 2026.

\bibitem[Shen et~al.(2025)Shen, Yan, Zhang, Hu, Du, and He]{shen2025codi}
Shen, Z., Yan, H., Zhang, L., Hu, Z., Du, Y., and He, Y.
\newblock {CODI}: Compressing chain-of-thought into continuous space via
  self-distillation.
\newblock In \emph{Proceedings of the Conference on Empirical Methods in
  Natural Language Processing (EMNLP)}, 2025.
\newblock arXiv:2502.21074.

\bibitem[Shen et~al.(2026)]{shen2025simcot}
Shen, Z. et~al.
\newblock {SIM-CoT}: Step-level implicit supervision for continuous chain of
  thought.
\newblock In \emph{International Conference on Learning Representations
  (ICLR)}, 2026.
\newblock arXiv:2509.20317.

\bibitem[Sun et~al.(2026)Sun, Zhang, Zhou, Ma, Qin, Su, Liu, Ma, Xu, Gao, Hao,
  and He]{anonymous2026staterank}
Sun, A., Zhang, H., Zhou, H., Ma, Y., Qin, Y., Su, T., Liu, Y., Ma, Z., Xu, J.,
  Gao, J., Hao, J., and He, R.
\newblock State rank dynamics in linear attention {LLMs}.
\newblock \emph{arXiv preprint arXiv:2602.02195}, 2026.

\bibitem[Yan et~al.(2022)Yan, Gui, Li, and He]{yan2022uniformity}
Yan, H., Gui, L., Li, W., and He, Y.
\newblock Addressing token uniformity in transformers via singular value
  transformation.
\newblock \emph{arXiv preprint arXiv:2208.11790}, 2022.

\bibitem[Zhu et~al.(2025)Zhu, Hao, Hu, Jiao, Russell, and
  Tian]{zhu2025superposition}
Zhu, H., Hao, S., Hu, Z., Jiao, J., Russell, S., and Tian, Y.
\newblock Reasoning by superposition: A theoretical perspective on chain of
  continuous thought.
\newblock In \emph{Advances in Neural Information Processing Systems
  (NeurIPS)}, 2025.
\newblock arXiv:2505.12514.

\end{thebibliography}

\end{document}